\documentclass[letterpaper,10 pt,conference]{ieeeconf}  
\IEEEoverridecommandlockouts                             

\usepackage{amsmath,amssymb,amstext}
\usepackage{graphicx}
\graphicspath{{figures_control/}{./}}
\usepackage{algorithm}
\usepackage{algpseudocode}
\usepackage{amsfonts}
\usepackage{booktabs}
\usepackage{xcolor}
\usepackage{cite}
\usepackage{float}
\usepackage[colorlinks=true,linkcolor=blue,citecolor=blue]{hyperref}
\usepackage{ifthen,amsmath}

\newtheorem{assumption}{Assumption}
\newtheorem{problem}{Problem}

\newtheorem{remark}{Remark}

\newtheorem{lemma}{Lemma}

\newcommand{\sg}[1]{\operatorname{sg}\!\left[#1\right]}

\title{\LARGE \bf
Learning Sampled-data Control for Swarms via MeanFlow
}

\author{Anqi Dong, Yongxin Chen, Karl H. Johansson, Johan Karlsson
\thanks{Anqi Dong is with the Department of Decision and Control Systems, Department of Mathematics, and the Digital Future, KTH Royal Institute of Technology, SE-114 28 Stockholm, Sweden {\tt\small anqid@kth.se}}%
\thanks{Yongxin Chen with the Institute for Robotics and Intelligent Machines, Georgia Institute of Technology, Atlanta, GA 30332, USA {\tt\small yongchen@gatech.edu}}%
\thanks{Karl Johansson with the Department of Decision and Control Systems and the Digital Future, KTH Royal Institute of Technology, Lindstedtsv\"agen 25, SE-114 28 Stockholm, Sweden {\tt\small kallej@kth.se}}%
\thanks{Johan Karlsson with the Department of Mathematics and the Digital Future, KTH Royal Institute of Technology, Lindstedtsv\"agen 25, SE-114 28 Stockholm, Sweden {\tt\small johan.karlsson@math.kth.se}}%
\thanks{This research has been supported in part by the Swedish Research Council Distinguished Professor Grant 2017-01078, Knut and Alice Wallenberg Foundation Wallenberg Scholar Grant, and the Swedish Research Council (VR) under grant 2020-03454, KTH Digital Futures. The computations were enabled by resources provided NAISS, partially funded by the Swedish Research Council grant 2022-06725.}
}

\begin{document}

\maketitle
\thispagestyle{empty}
\pagestyle{empty}

\begin{abstract}


Steering large-scale swarms with only limited control updates 
is often needed due to communication or computational constraints,
yet most learning-based approaches do not account for this and instead model instantaneous velocity fields. As a result, the natural object for decision making is a finite-window control quantity rather than an infinitesimal one. To address this gap, we consider the recent machine learning framework MeanFlow and generalize it to the setting with general linear dynamic systems. This results in a new sampled-data learning framework that operates directly in control space and that can be applied for swarm steering. To this end, we learn the finite-horizon coefficient that parameterizes the minimum-energy control applied over each interval, and derive a differential identity that connects this quantity to a local bridge-induced supervision signal. This identity leads to a simple stop-gradient regression objective, allowing the interval coefficient field to be learned efficiently from bridge samples. The learned policy is deployed through sampled-data updates, guaranteeing that the resulting controller exactly respects the prescribed linear time-invariant dynamics and actuation channel. The resulting method enables few-step swarm steering at scale, while remaining consistent with the finite-window actuation structure of the underlying control system.
\end{abstract}

\begin{keywords}
Flow matching, Few-step generative model, MeanFlow,
Sampled-data control, Swarm control. 
\end{keywords}

\section{Introduction}

Large-scale swarm systems arise in robotics, autonomous transportation, distributed sensing, and collective exploration \cite{brambilla2013swarm, chung2018survey, duan2023animal, zhou2022swarm}. In such settings, the objective is to steer an entire population toward a desired configuration or distribution, rather than to plan individual trajectories agent by agent. For large-scale swarm and wide operating region, this viewpoint of control is naturally distributed, as it seeks control policies that drive an initial law to a target law in a dynamically consistent manner \cite{chen2023density, haasler2021control, ringh2023mean}. In practice, swarm control is implemented in sampled-data form \cite{ackermann2012sampled}. Inputs are updated at discrete times and then applied over finite intervals. This distinction is negligible when update frequency of controller is high, but becomes crucial in few-step control, where each control action must remain in effect over a larger time interval \cite{bouffanais2016design,chang2025efficient}.

Recent fast developing Flow-based generative models face a closely related issue. These methods transform a reference distribution into a target distribution through time-dependent dynamics over a prescribed horizon. The learned object is typically instantaneous, such as velocity fields in flow matching \cite{lai2025principles,lipman2022flow} or score field in diffusion and score-based models \cite{dockhorn2021score,elamvazhuthi2025score,song2020score}. The training process samples an intermediate time and state, and regresses an instantaneous label induced by a training bridge \cite{albergo2023stochastic,liu2022flow}, while execution takes place over finite windows. When many small steps are available, this mismatch remains limited, whereas under few-step budget it becomes fundamental, since coarse discretization and repeated composition amplify local errors \cite{chen2023probability,lu2022dpm,salimans2022progressive,song2020denoising,zhang2022fast,zhang2023gddim}.

MeanFlow \cite{geng2025mean} addresses this mismatch by learning a window-level quantity rather than an infinitesimal one. Instead of modeling instantaneous velocity alone, it introduces an average velocity over each time window, namely the quantity actually used by coarse-time updates. This viewpoint has since been further studied from the perspectives of stability and robustness under coarse composition \cite{geng2025improved,hu2025cmt}. The key idea is that when execution is window-based, the learned object should match the quantity applied over that window.

In control theory, this principle may be adapted to structural dynamics. In many problems, admissible motion is not specified by an arbitrary vector field, but by a state equation in which homogeneous dynamics and fixed actuation channels, and the learnable part enters only through the control input \cite{brockett2015finite,kalman1959unified}. For sampled-data swarm control under linear dynamics \cite{chen2012optimal,francis2002stability}, this suggests that the relevant window-level object is not an averaged state velocity, but a control-space quantity associated with the finite-horizon response of the system. A natural choice is the minimum-energy control over each interval. Indeed, for a controllable linear system, the finite-horizon steering problem admits a closed-form minimum-energy control, determined by a coefficient chosen once per interval. Accordingly, the learned object should be aligned with this sampled-data implementation.

\begin{figure}
\centering
\includegraphics[width=1\linewidth]{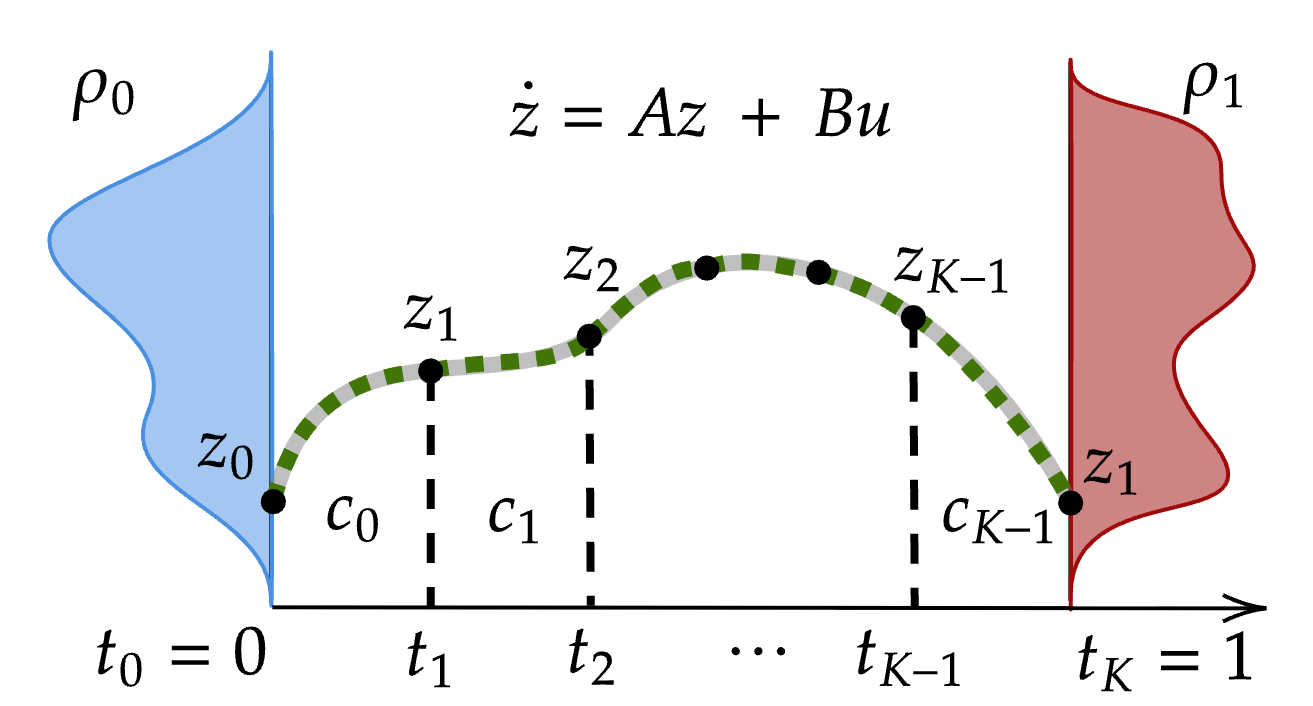}
\caption{Swarm distribution evolves from an initial law $\rho_0$ at $t_0=0$ to a target law $\rho_1$ at $t_K=1$ under the linear dynamics $\dot z=Az+Bu$. The proposed model learns an interval coefficient $c_k=c_\theta(z_k,t_k,t_{k+1})$, which generates the finite-horizon minimum-energy control over $[t_k,t_{k+1}]$. 
}
\vspace{-0.2in}
\label{fig:illu}
\end{figure}

To this end, we propose a framework for sampled-data swarm control under linear time-invariant dynamics in the spirit of MeanFlow. Specifically, on each interval, we learn a finite-horizon minimum-energy control by its interval coefficient. The resulting supervision target is available explicitly from the sampled bridge endpoints through the controllability Gramian. In implementation, the learned controller is applied through sampled-data window updates, so the prescribed dynamics and actuation map are preserved by construction. In this way, we place MeanFlow in a control framework and provide a scalable approach to few-step swarm steering under sampled-data linear dynamics \cite{chen2012optimal,francis2002stability}.

The organization of the rest of the paper is as follows. Section~\ref{sec:prelim} introduces the preliminaries of flow matching models. Section~\ref{sec:swarm-meanflow} formulates the sampled-data MeanFlow problem. Section~\ref{sec:algorithms} presents learning and implementation. Section~\ref{sec:experiments} provides numerical illustrations, and Section~\ref{sec:discussion} concludes the paper.

\section{Recap: Flow matching and MeanFlow}\label{sec:prelim}

We begin by recalling the learning framework underlying flow matching and MeanFlow. Both methods are built on bridge-based supervision. Rather than learning directly from a full trajectory distribution, one starts from two endpoint distributions and specifies how an intermediate state is generated between paired samples from these endpoints. This bridge construction converts the generative problem into a supervised one, because each intermediate point comes equipped with a target quantity induced by the bridge. In flow matching, this target is instantaneous and takes the form of a velocity at a single time. MeanFlow modifies this viewpoint by replacing the instantaneous object with one associated with a finite time interval, thereby aligning the learned quantity more closely with coarse-time execution.

Let $\rho_{\mathrm{data}}$ and $\rho_{\mathrm{base}}$ be two distributions on $\mathbb R^d$, and let $z_1 \sim \rho_{\mathrm{data}}$ and $z_0 \sim \rho_{\mathrm{base}}$. 
A bridge specifies how intermediate states are generated from these endpoint samples. Formally, it is a family of maps $\{\mathcal B_t\}_{t\in[0,1]}$ that induces a differentiable trajectory
$$
z_t=\mathcal B_t(z_0,z_1),
$$
with instantaneous velocity
$
v_t=\dot z_t=\frac{d}{dt}\mathcal B_t(z_0,z_1).
$
In this way, each sampled triple $(z_0,z_1,t)$ determines a supervised pair $(z_t,v_t)$. A standard example is the linear interpolation
$$
z_t=(1-t)z_0+t z_1,
$$
with constant velocity $v_t=z_1-z_0$ along the path.

Flow matching then seeks to learn a time-dependent vector field from the supervised samples induced by the bridge \cite{lipman2022flow,mei2024flow,yue2025oat}. Let $v_\theta(z,t)$ be a parametric velocity field. The standard training objective is then
\begin{align*}
\min_\theta \;
\mathbb E_{z_0,z_1,t}
\bigl\|v_\theta(z_t,t)-v_t\bigr\|^2
\quad
\text{s.t. } z_t=\mathcal B_t(z_0,z_1),
\end{align*}
where the expectation is taken over {$z_1 \sim \rho_{\mathrm{data}}$ and $z_0 \sim \rho_{\mathrm{base}}$}, 
$z_1\sim \rho_{\mathrm{base}}$, and $t\sim \mathcal U(0,1)$.
In this formulation, the learned object is instantaneous, since the supervision specifies the velocity only at the single time $t$ rather than over a finite time interval.

MeanFlow starts from the observation that, in practice, the learned dynamics are often executed over a finite time interval rather than in the infinitesimal-step limit \cite{geng2025mean}. It suggests that the quantity to be learned should also be tied to a finite window. In the unstructured setting, MeanFlow therefore replaces the instantaneous velocity target by a window-level object, namely the forward average velocity over the interval $[t,r]$,%
\footnote{The original MeanFlow is presented in a backward-time convention. Here we adopt a forward-time convention, so the finite-window object is defined on $[t,r]$ with $r>t$.}
$$
u(z_t,t,r):=\frac{1}{r-t}\int_t^r v(z_\tau,\tau)\,d\tau,
\qquad r>t.
$$
This quantity summarizes the average velocity from the current state $z_t$ to the future state $z_r$, and is therefore more naturally aligned with finite-step propagation than the instantaneous velocity $v(z_t,t)$.

For fixed $r$, differentiating this expression with respect to $t$ yields an identity that relates the window-level object to local quantities along the bridge. This leads to the training target
$$
u_{\mathrm{tgt}}
=
v(z_t,t)
+
(r-t)\Big[
\partial_z \bar u_\theta(z_t,t,r)\,v(z_t,t)
+
\partial_t \bar u_\theta(z_t,t,r)
\Big],
$$
which replaces the instantaneous supervision used in standard flow matching with supervision defined over the interval $[t,r]$. The corresponding training regression objective is
\begin{align*}
\min_\theta \;
\mathbb E_{z_0,z_1,t,r}
\Bigl\|
\bar u_\theta(z_t,t,r)
-
\sg\!\bigl(u_{\mathrm{tgt}}(z_t,t,r)\bigr)
\Bigr\|^2,
\end{align*}
where the expectation is taken over endpoint samples together with times $t<r$, and $\sg{\cdot}$ denotes stop-gradient operator \cite{geng2025mean}.

The essential point is that MeanFlow no longer learns only the local bridge velocity at a single time. Instead, it learns an object associated with a finite interval, which makes the learned dynamics more compatible with coarse-time execution and less sensitive to discretization error over long steps. This interval-based viewpoint is the one we will adapt in the controlled setting in the next section.

\section{Swarm control via MeanFlow}\label{sec:swarm-meanflow}

We consider a large-scale swarm whose configuration at time $t$ is described, at the macroscopic level, by a probability distribution $\rho_t$ on $\mathbb R^d$. The objective is to guide this distribution from a prescribed initial configuration $\rho_0$ to a target configuration $\rho_1$. Rather than designing separate controls for individual agents, we seek a description at the level of the evolving ensemble, where the collective motion is shaped through an underlying controlled dynamics.

At the level of a single state, we consider the linear system
\begin{align}\label{eq:lin-sys}
\dot z(\tau)=Az(\tau)+Bu(\tau),
\qquad \tau\in[0,1],
\end{align}
where $A\in\mathbb R^{d\times d}$ and $B\in\mathbb R^{d\times m}$ are fixed matrices, and $u(\tau)\in\mathbb R^m$ is the control input. The question that concerns us is the following: given two times $t<r$ and two states $z_t,z_r\in\mathbb R^d$, how should one drive the system from $z_t$ to $z_r$ over the interval $[t,r]$ in the most economical way.

This leads to the familiar finite-horizon minimum-energy steering problem
\begin{align}\label{eq:min-energy-problem}
\begin{aligned}
\min_{u(\cdot)}\qquad & \int_t^r \|u(\tau)\|^2\,d\tau \\
\text{subject to}\quad
& \dot z(\tau)=Az(\tau)+Bu(\tau),\\
&z(t)=z_t,\quad z(r)=z_r.
\end{aligned}
\end{align}
The problem is classical, but here it plays a central conceptual role. It tells us what the natural finite-window object should be in the controlled setting: not an instantaneous velocity, but the quantity that parameterizes the least-effort transfer across the entire interval.

Then the classical  variation-of-constants formula gives
\begin{align}\label{eq:voc}
z_r=\Phi(r,t)z_t+\int_t^r \Phi(r,\tau)B\,u(\tau)\,d\tau,
\end{align}
where
$\Phi(r,t):=e^{A(r-t)}$
denotes the state transition matrix.


\begin{assumption}[Controllability]\label{ass:controllability}
The pair $(A,B)$ is controllable. Equivalently, for every $r>t$, the Gramian 
\begin{align}\label{eq:gramian}
W(t,r):=\int_t^r \Phi(r,\tau)BB^\top\Phi(r,\tau)^\top\,d\tau
\end{align}
is nonsingular.
\end{assumption}


Under Assumption~\ref{ass:controllability}, every pair of states $(z_t,z_r)$ can be connected over $[t,r]$, and problem \eqref{eq:min-energy-problem} admits a unique minimizer. More precisely, the optimal control is of the form
\begin{align}\label{eq:u-star}
u^\star(\tau\mid t,r)=B^\top\Phi(r,\tau)^\top c(z_t,t,r),
\qquad \tau\in[t,r],
\end{align}
where, along the bridge, the coefficient $c(z_t,t,r)\in\mathbb R^d$ is given by
\begin{align}\label{eq:c-def}
c(z_t,t,r)
=
W(t,r)^{-1}\bigl(z_r-\Phi(r,t)z_t\bigr).
\end{align}
Substituting \eqref{eq:u-star} into \eqref{eq:voc}, one recovers
\begin{align}\label{eq:endpoint-representation}
z_r=\Phi(r,t)z_t+W(t,r)c(z_t,t,r).
\end{align}
Thus, the entire optimal transfer over the interval $[t,r]$ is encoded by the single vector $c(z_t,t,r)$. This coefficient summarizes, in a compressed yet exact form, the steering action required to move from the present state to the future one. It is therefore the natural analogue, in the controlled linear setting, of the window-level object that MeanFlow seeks to learn.

We now place this construction in the bridge-based training framework. Let $t\mapsto z_t$ be a differentiable bridge trajectory connecting samples from the endpoint distributions. For a sampled pair of times $t<r$, the states $z_t$ and $z_r$ lie on the same bridge trajectory, so the future state $z_r$ is determined once the bridge and the forward window are fixed. For this reason, the corresponding finite-horizon steering coefficient is naturally associated with the current bridge state and the interval $[t,r]$, and we denote it by $c(z_t,t,r)$. It satisfies
\begin{align}\label{eq:c-target-core}
W(t,r)c(z_t,t,r)=z_r-\Phi(r,t)z_t.
\end{align}
This is the finite-horizon quantity we seek to learn.



The coefficient $c(z_t,t,r)$ also admits an integral representation that reveals its dynamical meaning. If the bridge evolves according to
$$
\dot z_\tau=v(z_\tau,\tau),
$$
then, by combining \eqref{eq:voc} and \eqref{eq:c-target-core}, we obtain
\begin{align}\label{eq:c-integral-identity}
W(t,r)c(z_t,t,r)
=
\int_t^r \Phi(r,\tau)\bigl(\dot z_\tau-Az_\tau\bigr)\,d\tau.
\end{align}
If, moreover, the bridge dynamics are expressed through the input channel as
$
\dot z_\tau=Az_\tau+Bv(z_\tau,\tau),
$
then the identity reduces to
\begin{align}\label{eq:c-integral-identity-input}
W(t,r)c(z_t,t,r)
=
\int_t^r \Phi(r,\tau)B\,v(z_\tau,\tau)\,d\tau.
\end{align}
These relations show that the coefficient does not merely describe an endpoint discrepancy. It gathers, over the entire interval, the cumulative action needed to produce the transfer. In this sense, it is the proper finite-window object for the controlled problem, and the one that naturally takes the place of the averaged velocity in the original MeanFlow construction.

While $c(z_t,t,r)$ is defined through a finite-horizon relation, it also satisfies a local identity along the bridge. This is what allows the interval coefficient to be learned from local information. Starting from \eqref{eq:c-integral-identity-input}, we differentiate with respect to the left endpoint $t$. Since
$$
\frac{d}{dt}W(t,r)
=
-\,\Phi(r,t)BB^\top \Phi(r,t)^\top,
$$
we obtain
\begin{align}\label{eq:c-differential-identity}
\begin{aligned}
W(t,r)\frac{d}{dt}c(z_t,t,r)
&-\Phi(r,t)BB^\top \Phi(r,t)^\top c(z_t,t,r)\\
=
&-\Phi(r,t)B\,v(z_t,t).    
\end{aligned}
\end{align}
This relation has a clear interpretation. Although $c(z_t,t,r)$ summarizes the steering action over the entire interval $[t,r]$, its evolution with respect to the current time $t$ is governed by the local bridge velocity at $z_t$. In this way, the finite-window object remains accessible through local supervision.

Motivated by this identity, we introduce a parametric field
$$
c_\theta(z_t,t,r)\in\mathbb R^d,
$$
which is intended to approximate the forward finite-horizon steering coefficient along sampled bridge trajectories. The training objective is chosen so that $c_\theta$ satisfies \eqref{eq:c-differential-identity} in the least-squares sense.

\begin{problem}\label{prob:linear-meanflow}
Consider the linear system \eqref{eq:lin-sys} with fixed matrices $A$ and $B$. Under Assumption~\ref{ass:controllability}, learn a parametric field $c_\theta(z_t,t,r)$ by minimizing\footnote{In implementation, the target term is detached through a stop-gradient operator, exactly as in MeanFlow, so that the optimization is carried out with a stable one-sided regression objective.}
\begin{align*}
\mathcal L(\theta)
:=
\mathbb E_{z_0,z_1,t,r}
\bigl\|
R_\theta(z_t,t,r)
\bigr\|^2,
\end{align*}
where the residual is defined by
\begin{align*}
&R_\theta(z_t,t,r):=
\Phi(r,t)BB^\top \Phi(r,t)^\top c_\theta(z_t,t,r)\\
&\phantom{xxxxxx}-\sg{\Bigl(
W(t,r)\frac{d}{dt}c_\theta(z_t,t,r)
+
\Phi(r,t)B\,v(z_t,t)
\Bigr)},
\end{align*}
and $(z_t,z_r)$ are sampled along the training bridge with $t<r$. 
\end{problem}

Problem~\ref{prob:linear-meanflow} places the learning problem directly in coefficient space. The object being learned is not an instantaneous velocity in state space, but the forward finite-horizon coefficient that parameterizes the minimum-energy steering action over the interval $[t,r]$. In this way, the model learns the natural MeanFlow object associated with the linear controlled dynamics.

The next observation shows that this construction is fully consistent with the original MeanFlow viewpoint, and that it reduces, in the appropriate limiting case, to the familiar instantaneous supervision of Flow Matching \cite{liu2022flow}.

\begin{lemma}\label{lem:recover-fm-meanflow}
When $A=0$ and $B=I$, Problem~\ref{prob:linear-meanflow} reduces to the forward MeanFlow formulation. Moreover, if the bridge trajectory is differentiable, then in the limit $r\to t^+$ the finite-window target converges to the standard Flow Matching target \cite{mei2024flow}.
\end{lemma}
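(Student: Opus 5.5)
The plan is to substitute $A=0$, $B=I$ into every object of Section~\ref{sec:swarm-meanflow} and compare the resulting residual with the MeanFlow target recalled in Section~\ref{sec:prelim}. With $A=0$ we have $\Phi(r,\tau)=I$ for all $\tau$, so the Gramian \eqref{eq:gramian} becomes $W(t,r)=\int_t^r I\,d\tau=(r-t)I$, which is nonsingular for $r>t$; Assumption~\ref{ass:controllability} therefore holds trivially. The coefficient \eqref{eq:c-def} becomes $c(z_t,t,r)=(z_r-z_t)/(r-t)$. By \eqref{eq:c-integral-identity-input} (with $B=I$, so the bridge velocity is $v$ itself), this equals $\frac{1}{r-t}\int_t^r v(z_\tau,\tau)\,d\tau$, which is exactly the forward average velocity $u(z_t,t,r)$. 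So the learned object $c_\theta$ plays the role of $\bar u_\theta$.

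Next I will substitute into the residual of Problem~\ref{prob:linear-meanflow}. Since $\Phi BB^\top\Phi^\top=I$, $W=(r-t)I$ and $\Phi B=I$, we get
\begin{align*}
R_\theta = c_\theta(z_t,t,r) - \sg{(r-t)\tfrac{d}{dt}c_\theta(z_t,t,r) + v(z_t,t)}.
\end{align*}
The total derivative along the bridge expands by the chain rule as $\frac{d}{dt}c_\theta(z_t,t,r)=\partial_z c_\theta\, v(z_t,t)+\partial_t c_\theta$, with $r$ held fixed. The stop-gradient argument is then precisely $u_{\mathrm{tgt}}$, so $\mathcal L(\theta)$ coincides with the MeanFlow regression objective. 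I will also remark that \eqref{eq:c-differential-identity} reduces to $(r-t)\frac{d}{dt}u - u = -v$, which is the MeanFlow identity obtained by differentiating $(r-t)u=\int_t^r v$ in $t$. This shows the reduction holds at the level of exact identities and not only formally.

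For the limit $r\to t^+$, I will use the differentiability of the bridge. The exact target satisfies $c(z_t,t,r)=(z_r-z_t)/(r-t)\to \dot z_t=v(z_t,t)$, which is the Flow Matching label. Equivalently, the training target $v+(r-t)\frac{d}{dt}c_\theta$ tends to $v(z_t,t)$, provided $\frac{d}{dt}c_\theta$ stays bounded as $r\to t$. The objective then becomes $\mathbb E\|c_\theta(z_t,t,t)-v_t\|^2$, which is the Flow Matching loss with $v_\theta(z,t):=c_\theta(z,t,t)$.

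The main obstacle is this boundedness in the limit. It is automatic for the exact coefficient when the bridge is $C^2$, since then $\frac{d}{dt}\frac{z_r-z_t}{r-t}$ stays bounded. For the learned $c_\theta$, I will impose it as a regularity assumption: $c_\theta$ is $C^1$ up to the diagonal $r=t$, as holds for any smooth network.
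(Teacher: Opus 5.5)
Your proposal is correct and follows the paper's route: with $\Phi\equiv I$ and $W(t,r)=(r-t)I$ the coefficient becomes $c(z_t,t,r)=(z_r-z_t)/(r-t)$, which is the forward MeanFlow average velocity, and this difference quotient tends to $\dot z_t$, the Flow Matching label. Your chain-rule reduction of $R_\theta$ to the MeanFlow stop-gradient regression goes beyond the paper's proof, which checks only the target, and it is what ``Problem~\ref{prob:linear-meanflow} reduces to'' literally asks for; also, the exact coefficient needs no $C^2$ assumption, because the reduced identity gives $(r-t)\frac{d}{dt}c=c-v\to 0$ from differentiability alone, so your boundedness hypothesis is needed only for the learned $c_\theta$.
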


\begin{proof}
Consider first the case $A=0$ and $B=I$. Then $\Phi(r,t)=I$ and
$
W(t,r)=\int_t^r I\,d\tau=(r-t)I.
$
Accordingly, the defining relation \eqref{eq:c-def} becomes
$
(r-t)c(z_t,t,r)=z_r-z_t,
$
and hence
$$
c(z_t,t,r)=\frac{z_r-z_t}{r-t}.
$$
This is precisely the forward MeanFlow target over the interval $[t,r]$.

Assume the bridge trajectory is differentiable, then it has
$
\frac{z_r-z_t}{r-t}\to \dot z_t
$
as $r\to t^+$. Therefore,
$
\lim_{r\to t^+} c(z_t,t,r)=\dot z_t.
$
Thus, in the infinitesimal-window limit, the finite-horizon target reduces to the instantaneous bridge velocity, which is exactly the supervision target used in Flow Matching.
\end{proof}

Notably, the formulation is not restricted to first-order swarm kinematics. The state variable $z$ may include higher-order components. For instance, in a second-order model one may write
$
z=
\begin{bmatrix}
x &v
\end{bmatrix}^\top,
$
so that the control enters through an acceleration channel. In that case, $c(z_t,t,r)$ represents the coefficient of the corresponding finite-horizon minimum-energy control over the interval $[t,r]$. 

More broadly, the same construction extends without change to higher-order linear state-space models, as well as to bridge trajectories induced by optimal transport couplings between the initial and target swarm distributions \cite{tong2024improving}.

\section{Sampled-data interpretation}
The coefficient learned by the model has a direct operational meaning. Once a current state $z_t$ and a future time $r>t$ are given, the field $c_\theta(z_t,t,r)$ determines the control profile to be applied over the entire interval $[t,r]$. Specifically, the corresponding control signal is
\begin{align}\label{eq:u-star-implemented}
u(\tau)=B^\top \Phi(r,\tau)^\top c_\theta(z_t,t,r),
\qquad \tau\in[t,r],
\end{align}
and the induced state update is
\begin{align}\label{eq:gramian-window-update}
z_r=\Phi(r,t)z_t+W(t,r)c_\theta(z_t,t,r).
\end{align}
Thus, the learned coefficient is not merely a training target. It is precisely the quantity consumed by the finite-horizon sampled-data update over the window $[t,r]$.

The exact interval coefficient also possesses a natural composition property across adjacent windows. This shows that the finite-horizon steering action is consistent under temporal subdivision.

\begin{lemma}[Additivity]\label{lem:additivity}
Let $t<s<r$. Then along any trajectory of \eqref{eq:lin-sys},
\begin{align*}
&W(t,r)c(z_t,t,r)\\
&=\Phi(r,s)W(t,s)c(z_t,t,s)
+
W(s,r)c(z_s,s,r).
\end{align*}
\end{lemma}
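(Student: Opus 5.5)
The plan is to reduce everything to the endpoint representation \eqref{eq:c-target-core}, which says $W(t,r)c(z_t,t,r)=z_r-\Phi(r,t)z_t$ for any window and any states on the trajectory. No explicit formula for $c$ is needed. Applying this relation to each of the three windows $[t,r]$, $[t,s]$ and $[s,r]$ gives
\begin{align*}
W(t,r)c(z_t,t,r)&=z_r-\Phi(r,t)z_t,\\
W(t,s)c(z_t,t,s)&=z_s-\Phi(s,t)z_t,\\
W(s,r)c(z_s,s,r)&=z_r-\Phi(r,s)z_s.
\end{align*}
These hold for any states $z_t,z_s,z_r$ taken at the corresponding times along a trajectory, because the coefficient is defined purely through its endpoints. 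Nonsingularity of the Gramians from Assumption~\ref{ass:controllability} guarantees that each coefficient is well defined.

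Next I premultiply the second relation by $\Phi(r,s)$ and add the third. The $z_s$ terms cancel, since $\Phi(r,s)z_s-\Phi(r,s)z_s=0$. The semigroup property $\Phi(r,s)\Phi(s,t)=e^{A(r-s)}e^{A(s-t)}=e^{A(r-t)}=\Phi(r,t)$ turns the remaining terms into $z_r-\Phi(r,t)z_t$. This matches the first relation, which proves the claim.

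As a consistency check, and an alternative route, I would also verify the matching Gramian identity $W(t,r)=\Phi(r,s)W(t,s)\Phi(r,s)^\top+W(s,r)$, obtained by splitting the integral \eqref{eq:gramian} at $s$ and using the semigroup property inside the integrand. The same splitting applied to the integral representation \eqref{eq:c-integral-identity} gives the lemma directly. The integral over $[s,r]$ is the $[s,r]$ term, and on $[t,s]$ one factors $\Phi(r,\tau)=\Phi(r,s)\Phi(s,\tau)$.

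The only delicate point is interpretive rather than computational. The phrase ``along any trajectory'' must mean that $z_t,z_s,z_r$ are the states of one and the same curve at the three times; the specific control generating that curve is irrelevant. Since every identity used depends only on endpoints, the argument requires no optimality of the trajectory.
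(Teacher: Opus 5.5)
Your proposal is correct and is essentially the paper's proof. The paper likewise uses only the endpoint relation $W(t,r)c(z_t,t,r)=z_r-\Phi(r,t)z_t$, inserts $z_s$ via $z_r-\Phi(r,t)z_t=(z_r-\Phi(r,s)z_s)+\Phi(r,s)(z_s-\Phi(s,t)z_t)$ together with the semigroup property, and then applies the definition on $[t,s]$ and $[s,r]$; your alternative check by splitting the integral representation at $s$ is also valid but not needed.
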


\begin{proof}
By definition,
$$
W(t,r)c(z_t,t,r)=z_r-\Phi(r,t)z_t.
$$
Introduce the intermediate state $z_s$, and use the semigroup property
$$
\Phi(r,t)=\Phi(r,s)\Phi(s,t)
$$
to write
\begin{align*}
z_r-\Phi(r,t)z_t
&=
z_r-\Phi(r,s)z_s
+
\Phi(r,s)\bigl(z_s-\Phi(s,t)z_t\bigr).
\end{align*}
Applying the definition of the interval coefficient on the subintervals $[t,s]$ and $[s,r]$ yields
\begin{align*}
z_r\!-\!\Phi(r,t)z_t
=
W(s,r)c(z_s,s,r)\!
+\!
\Phi(r,s)W(t,s)c(z_t,t,s),
\end{align*}
which proves the claim.
\end{proof}

Lemma~\ref{lem:additivity} shows that the steering action over a long interval may be decomposed into two successive shorter transfers without losing consistency with the dynamics. In this sense, the coefficient provides a dynamically coherent window-level description of the controlled evolution.

When $c_\theta$ does not exactly coincide with the true interval coefficient, the discrepancy appears directly at the level of the sampled-data update. Indeed, one may write
\begin{align}\label{eq:gramian-residual}
z_r
=
\Phi(r,t)z_t+W(t,r)c_\theta(z_t,t,r)+\eta(t,r),
\end{align}
where
$$
\eta(t,r)=W(t,r)\bigl(c(z_t,t,r)-c_\theta(z_t,t,r)\bigr).
$$
Thus, the state-space error over the interval is exactly the coefficient approximation error through the Gramian. In particular, the residual vanishes when the learned field matches the exact finite-horizon steering coefficient.

\section{Control learning and implementation}\label{sec:algorithms}

We now illustrate how the learned interval-coefficient field is used in practice. The roles of training and sampling are different. During training, a differentiable bridge provides endpoint states from which the finite-horizon steering coefficient is constructed. During sampling, the bridge is no longer used. One evaluates the learned field on each time window and applies the resulting finite-horizon control through the corresponding sampled-data update.

\subsection{Interval control coefficient learning}\label{subsec:zoh-training}

Training uses only bridge information over sampled forward windows. For each sampled pair $(t,r)$ with $t<r$, the bridge provides the states $z_t$ and $z_r$, together with the local bridge velocity at time $t$. The network does not learn an instantaneous control signal directly. Instead, it learns the coefficient that determines the minimum-energy control over the entire interval $[t,r]$. Accordingly, supervision is imposed through the local differential identity satisfied by the forward finite-horizon steering coefficient, rather than by direct regression onto an explicit teacher coefficient.

For training, we use the minimum-energy bridge induced by the linear dynamics between sampled swarms with endpoint states $z_0 \sim \rho_0$ and $z_1 \sim \rho_1$. Specifically, for $\tau\in[0,1]$, the bridge is given by
\begin{align}\label{eq:bridge}
\scalebox{0.95}{$z_\tau
=
\Phi(\tau,0)z_0
+
W(0,\tau)\Phi(1,\tau)^\top W(0,1)^{-1}
\bigl(z_1-\Phi(1,0)z_0\bigr).$}
\end{align}
This trajectory is the finite-horizon minimum-energy path connecting $z_0$ and $z_1$ under the linear dynamics. Given a sampled forward window $[t,r]\subset[0,1]$, the corresponding bridge states $z_t$ and $z_r$ are obtained by evaluating \eqref{eq:bridge} at the times $t$ and $r$.

Differentiating \eqref{eq:bridge}, or equivalently using the bridge dynamics induced by the minimum-energy control, yields
\begin{align*}
\dot z_t
=
Az_t
+
BB^\top \Phi(1,t)^\top W(0,1)^{-1}
\bigl(z_1-\Phi(1,0)z_0\bigr).
\end{align*}
Hence, under the input-channel representation
$
\dot z_t = Az_t + Bv(z_t,t),
$
with corresponding bridge action satisfies
\begin{align*}
Bv(z_t,t)
=
BB^\top \Phi(1,t)^\top W(0,1)^{-1}
\bigl(z_1-\Phi(1,0)z_0\bigr).   
\end{align*}
Algorithm~\ref{alg:zoh-train} summarizes the resulting training procedure.

\begin{algorithm}[t]
\caption{Training the interval-coefficient model $c_\theta$}
\label{alg:zoh-train}
\begin{algorithmic}[1]
\Require Initial and target swarm distributions $\rho_0$ and $\rho_1$, system matrices $A$ and $B$, parametric interval-coefficient model $c_\theta(z,t,r)\in\mathbb R^d$
\While{not converged}
\State Sample endpoint states and training window
\[
z_0 \sim \rho_0,\;
z_1 \sim \rho_1,\;
t \sim \mathcal U(0,1),\;
r \sim \mathcal U(t,1)
\]
\State Compute the bridge states $z_t$ and $z_r$ in \eqref{eq:bridge}
\State Compute the bridge action at time $t$
\begin{align*}
Bv(z_t,t)
=
BB^\top \Phi(1,t)^\top W(0,1)^{-1}
\bigl(z_1-\Phi(1,0)z_0\bigr)
\end{align*}
\State Evaluate the predicted coefficient $c_\theta(z_t,t,r)$
\State Compute the residual $R_\theta(z_t,t,r)$ from the differential identity
\State Update $\theta$ by one gradient step on $\mathcal L(\theta)$
\EndWhile
\end{algorithmic}
\end{algorithm}

The bridge \eqref{eq:bridge} is used only during training, as a source of finite-window supervision. The network itself learns the coefficient that determines the control action over the entire interval. Since the training signal is imposed through the forward differential identity, the learned object remains window-level even though the loss is local in time.

\subsection{Interval control implementation}\label{subsec:zoh-sampling}

After training, the output is the learned coefficient field $c_\theta(z_t,t,r)$. Starting from a sample drawn from the initial swarm distribution, the swarm state is propagated forward along a discrete time grid. On each interval, the network is queried once to produce a single coefficient, and that coefficient determines the finite-horizon control over the window. The state is then propagated by the corresponding sampled-data update. Algorithm~\ref{alg:zoh-sample} summarizes the resulting swarm evolution.

\begin{algorithm}[t]
\caption{Finite-horizon forward swarm propagation}
\label{alg:zoh-sample}
\begin{algorithmic}[1]
\Require Swarm distribution $\rho_0$, system matrices $A$, $B$
\Require Trained interval-coefficient $c_\theta(z,t,r)\in\mathbb R^d$
\State Choose a time grid $0=t_0<t_1<\cdots<t_K=1$
\State Draw an initial swarm state sample $z_0\sim \rho_0$
\For{$k=0,\dots,K-1$}
\State Set the current time window to $[t_k,t_{k+1}]$
\State Compute predicted coefficient
$c_k=c_\theta(z_k,t_k,t_{k+1})$
\State Propagate swarm state with finite-horizon update
$$z_{k+1}=\Phi(t_{k+1},t_k)z_k+W(t_k,t_{k+1})c_k$$
\EndFor
\State Return swarm trajectory $\{z_k\}_{k=0}^{K}$
\end{algorithmic}
\end{algorithm}

The state update above is equivalent to applying, on each interval $[t_k,t_{k+1}]$, the control signal
\[
u_k(\tau)=B^\top \Phi(t_{k+1},\tau)^\top c_k,
\qquad \tau\in[t_k,t_{k+1}].
\]
Thus, the controller is updated only at the sampling instants, while the within-window is determined analytically by the system matrices and the learned coefficient.

Although the learning formulation is derived from bridge supervision, the implemented controller is not open-loop. The learned field $c_\theta(z,t,r)$ depends on the current state $z$, so the control applied on each interval is recomputed online as $c_k=c_\theta(z_k,t_k,t_{k+1})$.
The resulting implementation is therefore a sampled-data state-feedback controller.

\begin{remark}
Problem \ref{prob:linear-meanflow} provides natural degree of robustness. If the swarm state is perturbed, then the control on the next interval changes accordingly through the state dependence of $c_\theta$. The same mechanism allows the controller to respond to particle loss or stochastic perturbations during propagation, since the control action is generated from the current state.
\end{remark}

\section{Case studies}\label{sec:experiments}
We present two case studies for large-scale swarm control. Our first case considers planar navigation for large-scale ground robots, while the second concerns spatial maneuvering in three dimensions, representative of aerial or underwater swarms. All numerical experiments are implemented in PyTorch and executed on Tesla V100-SXM2-32GB GPU. The code to conduct all experiments can be found at \url{github.com/dytroshut/Meanflow_Control}.

\subsection{Two-dimensional planar swarm navigation}

We first consider two-dimensional planar swarm navigation. The initial swarm distribution at $t=0$ forms the initials of authors' first names, ``AYKJ'', and target distribution at $t=1$ forms authors' last names, ``DCJK''. Fig.~\ref{fig:2d} reports the evolution of the swarm distribution, with snapshots taken at $t=0,\,0.25,\,0.5,\,0.75,$ and $1$. Throughout, we color particles according to their spatial location to visually tracked deformation and the induced correspondence over time.

The first row corresponds to the drift-free case $A=0$ with $B=I$. This setting reduces to a standard MeanFlow-like generation model, where the dynamics are purely control-driven and the learned generation is carried out with $16$ steps. In this case, the swarm deforms in a direct manner, and the induced origin-to-destination correspondence is largely aligned with the horizontal direction in the plotted configuration. The second row introduces a rotational drift. We set $B=I$ and choose
$A=\begin{bmatrix}
0 & -\omega, \ 
\omega & 0
\end{bmatrix}$,
which generates a planar rotation. In our parameter setting, the accumulated drift over the horizon corresponds to a $90^\circ$ rotation. This change in the underlying dynamics reshapes the entire transport process. While the same endpoint distributions are enforced, the induced origin-to-destination correspondence changes with respect to the dynamics. The colored particles that were matched along a horizontal direction in the drift-free case are now matched along a vertical direction, reflecting the rotation imposed by the drift. This example highlights the central point of our framework: \emph{the underlying dynamics alter the geometry of the feasible paths and, consequently, the transport plan that between endpoint distributions}.

\begin{figure*}[htb!]
    \centering
    \includegraphics[width=1\linewidth]{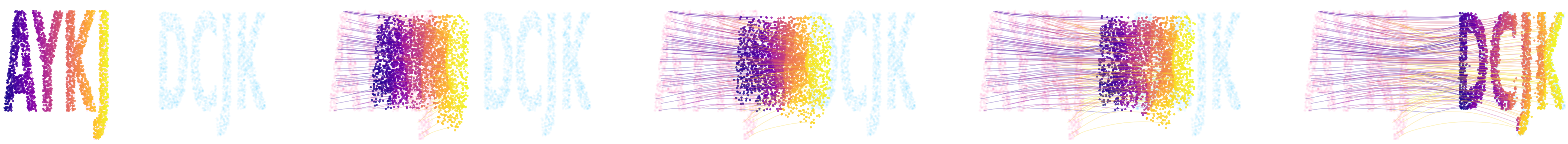}
    \includegraphics[width=1\linewidth]{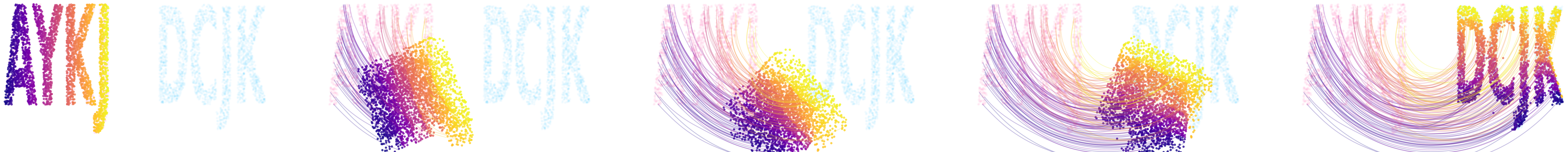}
    \caption{Two-dimensional case: swarm from ``AYKJ'' to ``DCJK''.}
    \label{fig:2d}
\end{figure*}

\subsection{Three-dimensional spatial swarm maneuvering}

Finally, we showcase our model on a geometrically more demanding three-dimensional task (Fig.~\ref{fig:3d}), where the swarm is transported from an initial pyramid to a target torus. This example is challenging in several ways. The swarm must reorganize its spatial support in $\mathbb{R}^3$, carve out a hollow central region to form the torus, and do so through an evolution that remains coherent rather than collapsing into unstable or fragmented motion.

In Fig.~\ref{fig:3d} we compare several underlying dynamics. The first row uses the standard MeanFlow setting (drift-free dynamics). The second and third rows introduce rotational drift restricted to the $x$--$y$ plane, illustrating how planar rotation reshapes the intermediate evolution with same endpoint distributions. The fourth row applies a combined rotational drift in three dimensions. In all cases, the controller is implemented using $16$ steps.

In the drift-free setting and in the $x$--$y$ rotational cases, these tracked particles evolve largely within the expected planes, and the resulting matching retains a relatively direct geometric structure. In contrast, when the rotational drift acts in all three dimensions, the drift fundamentally alters the feasible paths. This leads to a noticeably different intermediate geometry and, consequently, a different final matching pattern.

\begin{figure*}[t]
    \centering
    \includegraphics[width=1\linewidth]{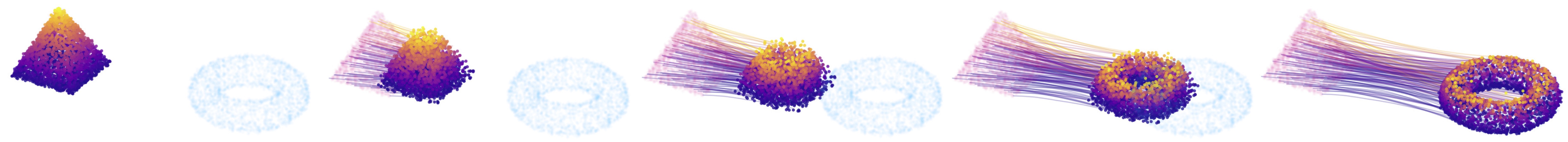}
    \includegraphics[width=1\linewidth]{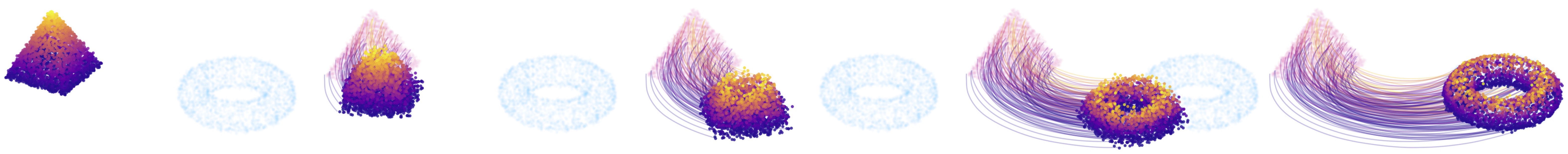}
    \includegraphics[width=1\linewidth]{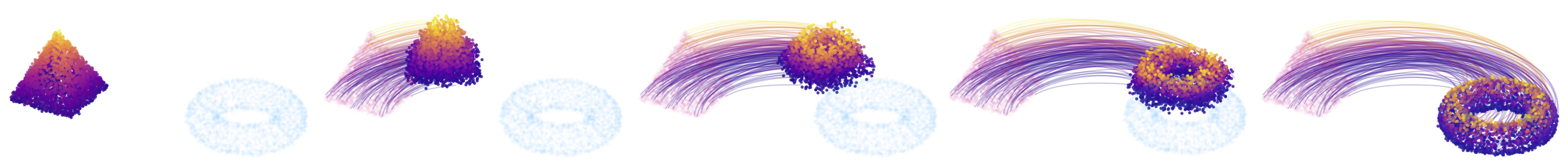}
    \includegraphics[width=1\linewidth]{3d_rotation2.jpg}
    \caption{Three-dimensional case: swarm from pyramid to torus.}
    \label{fig:3d}
\end{figure*}

\section{Conclusion}\label{sec:discussion}

In this paper, we propose a control-space MeanFlow formulation for swarm control under sampled-data linear dynamics. 
In this approach, the control signal is broadcast to all agents at the beginning of each time interval, and each agent then operates autonomously over that interval. The key advantage of mean-flow methods is that they eliminate the need for frequent control updates.
More specifically, the learned object is the time-interval-constant coefficient of the finite-horizon minimum-energy control, rather than an instantaneous state-space field. The resulting supervision target is available explicitly from the sampled bridge endpoints through the controllability Gramian, and the learned coefficient field is used directly in the sampled-data implementation. Extensions to nonlinear dynamics, constrained control, and stability analysis under coarse-time execution remain for future work.

\bibliographystyle{IEEEtran}
\bibliography{references}

\end{document}